\newtheorem{definition}{Definition}[section]
\newtheorem{theorem}{Theorem}[section]
\newtheorem{lemma}{Lemma}[section]
\newtheorem{corollary}{Corollary}[section]
\newtheorem{assumption}{Assumption}
\newcommand{\R}{{\mathbb R}}
\newcommand{\N}{{\mathbb N}}
\newcommand{\E}{{\mathbb E}}
\newcommand{\Py}{{\mathbb P}}
\newcommand{\vct}[1]{\bm{#1}}
\newcommand{\rv}[1]{\bm{\mathrm{#1}}}
\begin{document}

\title{{Sequential Experiment Design for Hypothesis Verification}}
\author{Dhruva Kartik, Ashutosh Nayyar and Urbashi Mitra
\thanks{
D. Kartik, A. Nayyar and U. Mitra are with the Department of Electrical
Engineering, University of Southern California, Los Angeles, CA 90089
(e-mail: mokhasun@usc.edu; ashutosn@usc.edu; ubli@usc.edu). This research was supported, in part, by National Science Foundation under Grant NSF CNS-1213128, CCF-1410009, CPS-1446901, Grant ONR N00014-15-1-2550, and Grant AFOSR FA9550-12-1-0215.
}}

\maketitle

\begin{abstract}
Hypothesis testing is an important problem with applications in target localization, clinical trials etc. Many active hypothesis testing strategies operate in two phases: an exploration phase and a verification phase. In the exploration phase, selection of experiments is such that a moderate level of confidence on the true hypothesis is achieved. Subsequent experiment design aims at improving the confidence level on this hypothesis to the desired level. In this paper, the focus is on the verification phase. A confidence measure is defined and active hypothesis testing is formulated as a confidence maximization problem in an infinite-horizon average-reward Partially Observable Markov Decision Process (POMDP) setting. The problem of maximizing confidence conditioned on a particular hypothesis is referred to as the hypothesis verification problem. The relationship between hypothesis testing and verification problems is established. The verification problem can be formulated as a Markov Decision Process (MDP). Optimal solutions for the verification MDP are characterized and a simple heuristic adaptive strategy for verification is proposed based on a zero-sum game interpretation of Kullback-Leibler divergences. It is demonstrated through numerical experiments that the heuristic performs better in some scenarios compared to existing methods in literature.

\end{abstract}

\section{Introduction}
Hypothesis testing is a classical problem and has been addressed in various settings. The problem can be described qualitatively as follows. An agent is interested in a phenomenon, and wants to test if the phenomenon conforms to any one of the hypotheses from a known class. The agent can perform various experiments and based on the observations from these experiments, it needs to infer the true hypothesis. As opposed to the one-shot hypothesis testing problem, an active agent can choose which experiment to perform based on the observations made in the past. The agent seeks to select experiments such that \emph{all} false hypotheses are eliminated as quickly as possible. 

Many active hypothesis testing strategies \cite{nitinawarat2013controlled,naghshvar2013active} operate in two phases. The first phase is an \emph{exploration} phase in which the experiment design is such that a moderate level of confidence is achieved on the true hypothesis. In most cases, this phase terminates in finite time almost surely \cite{chernoff1959sequential}. The second is a \emph{verification} phase in which the agent has a moderate level of confidence on some hypothesis and experiments are selected such that confidence on this hypothesis is improved to the desired level. When the desired confidence level is very high, the verification cost dominates the performance. In this paper, we make the notions of exploration and verification more formal and focus on analyzing the verification phase. 

Active hypothesis testing finds applications in many areas such as sensor selection for target detection and localization, state tracking, design of clinical trials and learning unknown functions from queries \cite{naghshvar2015bayesian}. Consequently, the verification phase plays an important role in all these applications. 

We consider a slightly different mathematical formulation for hypothesis testing than previously explored \cite{nitinawarat2013controlled,naghshvar2013active}. Using posterior belief on the set of hypotheses, we define a \emph{confidence} level called Bayesian log-likelihood ratio. The objective is to design an experiment selection strategy that maximizes the expected rate of increase in the confidence level. Our contributions in this paper can be summarized as follows:
\begin{enumerate}
\item We formulate the verification problem as an infinite-horizon average-reward Markov Decision Process (MDP) problem.
\item We characterize the optimal rate using infinite-horizon Dynamic Programming (DP).
\item We identify a set of \emph{critical} experiments. We then show that any strategy that selects these experiments while satisfying a stability criterion is asymptotically optimal.
\item We design a new heuristic experiment selection strategy and numerically show that it achieves better performance compared to existing methods in some scenarios.
\end{enumerate}

The rest of the paper is organized as follows. In Section \ref{priorwork}, we discuss the relation between our problem and those in prior works. Section \ref{sec:formulation} formulates the problem. Section \ref{sec:mdp} relates the problem to the MDP framework and defines critical experiments. In Section \ref{sec:solution}, we solve the DP and in Section \ref{sec:numerical}, we describe an adaptive strategy and numerically compare it with existing policies. We conclude the paper in Section \ref{sec:conclusion}.

\subsection{Prior Work}\label{priorwork}
The simplest active hypothesis testing problem was first formulated by Chernoff in \cite{chernoff1959sequential} inspired by Wald's analysis of the sequential probability ratio test \cite{wald1973sequential}. Thereafter, it has been generalized in different ways depending on the target application \cite{nitinawarat2013controlled,naghshvar2013active}. A major difference between our formulation and the formulation in these works is the reward structure. Prior works consider a combination of expected stopping time and Bayesian error probability. Fixed horizon problems have also been considered and they try to minimize the Bayesian error probability or maximal error probability \cite{nitinawarat2013controlled}. We define a notion of confidence and maximize the expected rate of increase in confidence over long horizons. In prior formulations, if the agent makes an error in guessing the true hypothesis, it incurs a cost of 1 (or some constant $c$) irrespective of its confidence level. Whereas in our formulation, we reward the agent for generating observations that result in a high confidence level on the true hypothesis. We believe that our formulation is related to the stopping time formulation because of the strong similarity in the results. In \cite{bessler1960theory,chernoff1959sequential,nitinawarat2013controlled,naghshvar2013active}, the authors obtain asymptotically tight performance bounds and design policies that are asymptotically optimal. When the policies in these works are adapted to the verification problem defined herein, they turn out to be open-loop and randomized. A closed loop policy was designed in \cite{naghshvar2012extrinsic} but this may not always be asymptotically optimal. In this paper, we design a strategy for verification that is more adaptive and conjecture that it is asymptotically optimal.

\subsection{Notation}\label{notation}
Random variables/vectors are denoted by upper case boldface letters, their realization by the corresponding lower case letter. We use calligraphic fonts to denote sets (e.g. $\mathcal{U}$) and $\Delta \mathcal{U}$ is the probability simplex over a finite set $\mathcal{U}$. In general, subscripts are used as time index. There are two exceptions (${\rho}_j(n),\rv{X}_j(n)$) to this convention where the subscript denotes the hypothesis and $n$ denotes time. For time indices $n_1\leq n_2$, $\rv{Y}_{n_1:n_2}$ is the short hand notation for the variables $(\rv{Y}_{n_1},\rv{Y}_{n_1+1},...,\rv{Y}_{n_2})$.
For a strategy $g$, we use $\Py^g[\cdot]$ and $\E^g[\cdot]$ to indicate that the probability and expectation depend on the choice of $g$. The Shannon entropy of a discrete distribution $p$ over a finite space $\mathcal{Y}$ is given by
\begin{equation}
H(p) = -\sum_{y \in \mathcal{Y}}p(y)\log p(y).
\end{equation}
And the Kullback-Leibler divergence between distributions $p$ and $q$ is given by
\begin{equation}
D(p || q) = \sum_{y \in \mathcal{Y}}p(y)\log\frac{p(y)}{q(y)}.
\end{equation}


\section{Problem Formulation}\label{sec:formulation}

\begin{figure}
\centering
\scalebox{1}{
\begin{forest}
[$\vct{\rho}(1)$
[${h=1}$, edge label={node[midway,above left,font=\scriptsize]{Nature}} 
  [$u_1$,edge label={node[midway,above left,font=\scriptsize]{Agent}} 
   [$y_1$,edge label={node[midway,above left,font=\scriptsize]{Nature}}
   	[$u_1$,edge label={node[midway,above left,font=\scriptsize]{Agent}}[$\vdots$]]
	[$u_2$[$\vdots$]]]
   [$y_2$
   	[$u_1$ [$\vdots$]]
	[$u_2$[$\vdots$]]]
  ]
  [$u_2$
  [$\vdots$] [$\vdots$]]
]
[${h=2}$ 
  [$u_1$
  [$\vdots$] [$\vdots$]]
    [$u_2$, 
   [$y_1$,
   	[$u_1$,[$\vdots$]]
	[$u_2$[$\vdots$]]]
   [$y_2$
   	[$u_1$ [$\vdots$]]
	[$u_2$[$\vdots$]]]
  ]
]
]
\end{forest}}
\caption{Agent's choices and subsequent observations represented as a tree. Every instance of the probability space can be uniquely represented by a path in this tree.}
\label{fig:illustree}
\end{figure}
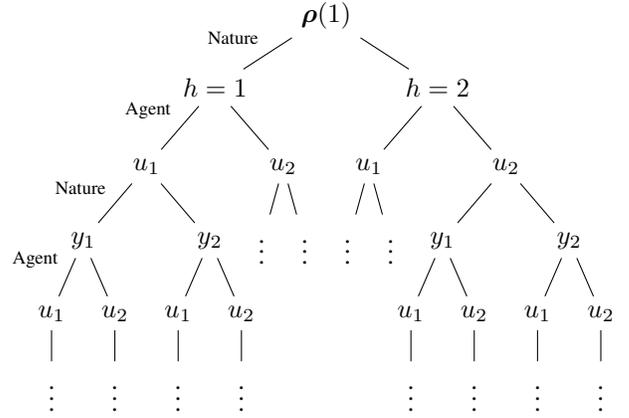

Let $\mathcal{H} \subset \N$ be a finite set of hypotheses and let $\rv{H}$ be the true hypothesis. At each time $n \in \N$, the agent can perform an experiment $\rv{U }_n \in \mathcal{U}$ and obtain an observation $\rv{Y}_n \in \mathcal{Y}$. For simplicity, let us also assume that the sets $\mathcal{U}$ and $\mathcal{Y}$ are finite. When an experiment $u \in \mathcal{U}$ is performed for the $k$th time, the observation $\rv{Y}$ obtained is given by
\begin{equation}
\rv{Y} = \xi(\rv{H}, u,\rv{W}_k^u),
\end{equation}
where $\{\rv{W}_k^u: u \in \mathcal{U}, k \in \N\}$ is a collection of mutually independent and identically distributed primitive random variables. The observation $\rv{Y}_n$ at time $n$ can be expressed as
\begin{equation}
\rv{Y}_n = \xi(\rv{H}, \rv{U}_n,\rv{W}_n).
\end{equation}
The probability of observing $y$ after performing an experiment $u$ under hypothesis $h$ is denoted by $p_h^u(y)$.

The information available at time $n$, denoted by $\rv{I}_n$, is the collection of all experiments performed and the corresponding observations up to time $n-1$, i.e.
\begin{equation}
\rv{I}_n = \{\rv{U}_{1:n-1},\rv{Y}_{1:n-1}\}.
\end{equation}
Actions of the agent at time $n$ can be functions of $\rv{I}_n$. Let the policy used for selecting the experiment be $g_n$, i.e.
\begin{equation}
\rv{U}_n = g_n(\rv{I}_n).
\end{equation}
The sequence of all the policies $\{g_n\}$ is denoted by $g$ which is referred to as a \emph{strategy}. Let the collection of all such strategies be $\mathcal{G}$. 

Using the available information, the agent forms a \emph{posterior belief} $\vct{\rho}(n)$ on $\rv{H}$ at time $n$ which is given by
\begin{equation}\label{postbelief}
\rho_h(n) = \Py[\rv{H} = h \mid \rv{Y}_{1:n-1},\rv{U}_{1:n-1}].
\end{equation}

\begin{definition}[Bayesian Log-Likelihood Ratio]
The Bayesian log-likelihood ratio $\mathcal{C}_h(\vct{\rho})$ associated with an hypothesis $h \in \mathcal{H}$ is defined as
\begin{equation}
\mathcal{C}_h(\vct{\rho}) := \log\frac{\rho_h}{1-\rho_h}.\\
\end{equation}
\end{definition}
\vspace{0.05in}
The Bayesian log-likelihood ratio (BLLR) is the logarithm of the ratio of the probability that hypothesis $h$ is true versus the probability that hypothesis $h$ is not true. BLLR is obtained by applying the \emph{logit} function (also referred to as \emph{log-odds} in statistics \cite{hosmer2013applied}) on the posterior belief $\rho_h$. The logit function amplifies increments in $\rho_h$ when $\rho_h$ is close to $0$ or $1$. We can interpret BLLR as a measure of confidence on hypothesis $h$ and thus, we refer to it as \emph{confidence level}.

\begin{figure}[]

\hspace{0.0in}
\begin{tikzpicture}
\begin{axis}[xlabel = $p$, xmajorgrids = true,ymajorgrids = true,grid style = dashed,legend pos=north west]
\addplot[domain = 0.0001:0.9999,samples = 1000]{ln(x/(1-x))};
\legend{$\log\frac{p}{1-p}$}
\end{axis}
\end{tikzpicture}
\caption{The logit function is the inverse of the logistic sigmoid function $1/(1+e^{-x})$. It is widely used in statistics and machine learning to quantify confidence level \cite{hosmer2013applied}.}
\label{logitplot}
\end{figure}
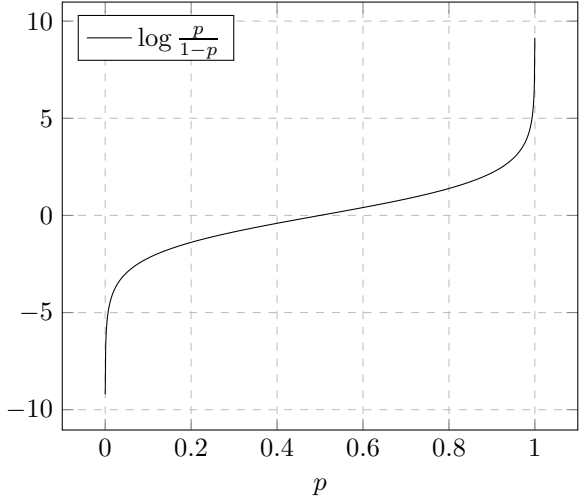

The objective is to design an experiment selection strategy $g$ such that the confidence level $\mathcal{C}_{\rv{H}}$ on the true hypothesis $\rv{H}$ increases as quickly as possible. In other words, the total reward after acquiring $N$ observations is the average rate of increase in the confidence level on the true hypothesis $\rv{H}$ and is given by
\begin{equation}
\frac{\mathcal{C}_{\rv{H}}(\vct{\rho}(N+1))-\mathcal{C}_{\rv{H}}(\vct{\rho}(1))}{N}.
\end{equation}
More explicitly, we seek to design a strategy $g$ that maximizes the asymptotic expected reward $K(g)$ which is defined as
\begin{align*}
K(g) &:= \lim_{N \to \infty} \inf \frac{1}{N} \;\E^g \left[\mathcal{C}_{\rv{H}}(\vct{\rho}(N+1))- \mathcal{C}_{\rv{H}}(\vct{\rho}(1)) \right].
\end{align*}
Henceforth, we refer to this problem as the \emph{Expected Confidence Maximization} (ECM) problem for \emph{hypothesis testing}. For a hypothesis $h$ and a strategy $g \in \mathcal{G}$, define $J(g,h)$ as
\begin{align*}
&\lim_{N \to \infty} \inf \frac{1}{N} \;\E^{g} \left[\mathcal{C}_{\rv{H}}(\vct{\rho}(N+1))- \mathcal{C}_{\rv{H}}(\vct{\rho}(1)) \mid \rv{H} = h\right].
\end{align*}
The value $J(g,h)$ represents the performance of a strategy $g$ conditioned on the hypothesis $h$. Let \begin{equation}\label{supdef}
    J^*(h) = \sup_{g\in \mathcal{G}}J(g,h).
\end{equation}
For a given hypothesis $h$, we refer to the problem of maximizing $J(g,h)$ as the \emph{hypothesis verification} problem. Let $g^*(h)$ be an optimal verification strategy, i.e. it achieves the supremum in equation (\ref{supdef}). We will later show that the existence of an optimal strategy $g^*(h)$ is guaranteed under a mild assumption.

\subsection{Hypothesis Testing vs Hypothesis Verification}
The optimal verification cost $J^*(h)$ can be used to obtain an upper bound on the expected reward $K(g)$ in the hypothesis testing problem.
\begin{lemma}\label{veriflemma}
For any experiment selection strategy $g \in \mathcal{G}$, we have
\begin{equation}
    K(g) \leq \sum_{h\in\mathcal{H}}\rho_h(1)J^*(h).
\end{equation}
\end{lemma}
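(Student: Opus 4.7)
My plan is to decompose $K(g)$ via the tower property, conditioning on the true hypothesis $\rv{H}$, and then bound each conditional term by the verification value $J^*(h)$. Since the prior $\vct{\rho}(1)$ is deterministic with $\Py[\rv{H}=h] = \rho_h(1)$,
$$\frac{1}{N}\E^g\!\left[\mathcal{C}_{\rv{H}}(\vct{\rho}(N+1)) - \mathcal{C}_{\rv{H}}(\vct{\rho}(1))\right] = \sum_{h \in \mathcal{H}} \rho_h(1) \cdot \frac{1}{N}\E^g\!\left[\mathcal{C}_h(\vct{\rho}(N+1)) - \mathcal{C}_h(\vct{\rho}(1)) \mid \rv{H}=h\right].$$
Taking $\liminf_N$ on the left recovers $K(g)$; the goal is to push the $\liminf$ through the finite sum and identify each resulting term as bounded by $J^*(h)$.

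The technical obstacle is that $\liminf$ commutes with a finite non-negatively weighted sum only in the super-additive direction, which is the wrong sense for an upper bound. I would get around this by picking a subsequence $\{N_k\}$ along which the left-hand side converges to $K(g)$, and then, invoking the finiteness of $\mathcal{H}$ together with uniform boundedness of each conditional rate (which follows from finiteness of $\mathcal{U},\mathcal{Y},\mathcal{H}$ together with the mild positivity assumption on $p_h^u$ under which $g^*(h)$ is guaranteed to exist), applying a Bolzano--Weierstrass argument to extract a further subsequence along which each conditional rate $\tfrac{1}{N_k}\E^g[\mathcal{C}_h(\vct{\rho}(N_k+1)) - \mathcal{C}_h(\vct{\rho}(1)) \mid \rv{H}=h]$ converges to some finite $\ell_h$. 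Passing to the limit along this subsequence in the displayed identity yields $K(g) = \sum_h \rho_h(1)\ell_h$.

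The remaining step, which I expect to be the main obstacle, is to show $\ell_h \leq J^*(h)$ for every $h$. The difficulty is that the definition $J^*(h) = \sup_{g'} \liminf_N(\cdot)$ only controls $\liminf$s of conditional rates, whereas a subsequence limit $\ell_h$ can be as large as $\limsup_N \tfrac{1}{N}\E^g[\mathcal{C}_h(\vct{\rho}(N+1)) - \mathcal{C}_h(\vct{\rho}(1)) \mid \rv{H}=h]$. To close this gap I would invoke the verification MDP structure developed in Sections \ref{sec:mdp}--\ref{sec:solution}: the optimal average reward $J^*(h)$ together with a bounded bias/value function satisfies the average-reward Bellman inequality, from which one obtains the one-sided bound $\E^g[\mathcal{C}_h(\vct{\rho}(N+1)) - \mathcal{C}_h(\vct{\rho}(1)) \mid \rv{H}=h] \leq N\,J^*(h) + O(1)$ for \emph{every} strategy $g$, uniformly in $N$. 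Dividing by $N$ and taking the subsequence limit then gives $\ell_h \leq J^*(h)$, and substituting back yields $K(g) \leq \sum_h \rho_h(1) J^*(h)$ as required.
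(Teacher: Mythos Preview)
Your approach follows the same decomposition as the paper---condition on $\rv{H}$ via the tower property and then bound each conditional piece by $J^*(h)$---but you are considerably more careful than the paper about the $\liminf$. The paper's proof is two lines: it simply asserts $K(g) = \sum_{h}\rho_h(1)J(g,h)$ and then uses $J(g,h)\le J^*(h)$. It does not justify interchanging the $\liminf$ with the finite sum, and as you correctly observe, superadditivity of $\liminf$ only gives $K(g)\ge \sum_h \rho_h(1)J(g,h)$ in general, which is the wrong direction.

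Your subsequence/Bolzano--Weierstrass argument, together with the uniform-in-$N$ bound
\[
\E^g\!\left[\mathcal{C}_h(\vct{\rho}(N+1)) - \mathcal{C}_h(\vct{\rho}(1)) \mid \rv{H}=h\right] \le N\,J^*(h) + O(1),
\]
does close this gap. That bound is indeed available from the average-reward DP in Sections~\ref{sec:mdp}--\ref{sec:solution}: with $J'=R^*=J^*(h)$ and the bias function $w$ of Lemma~\ref{lemma:fp}, the fixed-point inequality yields $\sum_{n=1}^N \E^g r(\vct{\rho}(n),\rv{U}_n)\le N R^* + w(\vct{\rho}(1)) - \E^g w(\vct{\rho}(N+1))\le N R^* + w(\vct{\rho}(1))$, since $w\ge 0$. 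Those sections do not invoke the present lemma, so there is no circularity. Your proof is therefore correct and in fact patches a subtlety that the paper's two-line argument glosses over.
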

\begin{proof}
For any strategy $g \in \mathcal{G}$, we have
\begin{align}
    K(g) &= \sum_{h\in\mathcal{H}}\rho_h(1)J(g,h) \leq \sum_{h\in\mathcal{H}}\rho_h(1)J^*(h).
\end{align}
The last inequality follows from the definition of $J^*(h)$.
\end{proof}
It is clear from the proof of Lemma \ref{veriflemma} that this upper bound is achieved by employing the strategy $g^*(h)$ when hypothesis $h$ is true. However, the agent cannot use different strategies under different hypotheses because it does not know the true hypothesis $\rv{H}$. Therefore, we propose an experiment selection strategy of the following form. Similar strategies have also been used in \cite{naghshvar2013active}.
\begin{equation}
\bar{g}(\vct{\rho}) = 
\begin{cases}
g^*(h)(\vct{\rho}) &\text{if for some } h, \rho_h > \bar{\rho}\\
g^e(\vct{\rho}) &\text{otherwise},
\end{cases}
\end{equation}
where $0.5 < \bar{\rho} < 1$ is a constant and $g^e$ is an \emph{exploration} strategy. The interpretation of the strategy $\bar{g}$ is that when the agent has a moderate level of confidence on some hypothesis $h$, it employs the corresponding verification strategy $g^*(h)$. This is to \emph{verify} if hypothesis $h$ is indeed true by further improving its confidence level. When the agent is \emph{not} very confident about any particular hypothesis, the agent employs an exploration strategy $g^e$. The primary purpose of the exploration strategy is to ensure that $\rho_{\rv{H}}$ eventually crosses the threshold $\bar{\rho}$. A naive exploration strategy is to randomly select every experiment uniformly. Better exploration strategies do exist \cite{naghshvar2013active, naghshvar2012extrinsic}. It remains to show that a strategy like $\bar{g}$ can indeed achieve the upper bound in Lemma \ref{veriflemma}. In this paper, we focus on the hypothesis verification problem. We derive sufficient conditions for an experiment selection strategy to be an \emph{optimal} verification strategy.

\section{Markov Decision Process Formulation}\label{sec:mdp}
In this section, we show that the verification problem can be formulated as an infinite-horizon average-reward MDP problem. All of the following analysis is for $h = 1$ and with slight abuse of notation, we henceforth refer to $g^*(1)$ and $J(g,1)$ as $g^*$ and $J(g)$, respectively. The same analysis can be repeated for any other $h$ to obtain similar results.

The state of the MDP is the posterior belief $\vct{\rho}(n)$. The posterior belief is updated using Bayes' rule. Thus, if $\rv{U}_n = u$ and $\rv{Y}_n = y$, we have
\begin{align}
{\rho}_{h}(n+1) = \frac{\rho_h(n)p_h^u(y)}{\sum_{h'}\rho_{h'}(n)p^u_{h'}(y)}.\label{bayes}
\end{align} 
For convenience, we denote the Bayes' update in (\ref{bayes}) by
\begin{align}
\vct{\rho}({n+1}) &= F(\vct{\rho}(n),\rv{U}_n,\rv{Y}_{n}).
\end{align}
Since $\rv{H} = 1$, we have $\rv{Y}_n = \xi(1,\rv{U}_n,\rv{W}_n)$. Clearly, the dynamics of this system are Markovian. The expectation of average confidence rate under a strategy ${g}$ is given by
\begin{align}
\label{objfin}J_N(g) :&=\frac{1}{N}\E^{{g}} \left[\mathcal{C}_{1}(\vct{\rho}(N+1))- \mathcal{C}_{1}(\vct{\rho}(1)) \right]\\
&= \frac{1}{N}\E^{{g}} \sum_{n = 1}^N \left[\mathcal{C}_1(\vct{\rho}(n+1))- \mathcal{C}_1(\vct{\rho}(n)) \right]\\
\nonumber&= \frac{1}{N}\E^{{g}} \sum_{n = 1}^N \E \left[\mathcal{C}_{1}(\vct{\rho}(n+1))- \mathcal{C}_{1}(\vct{\rho}(n)) \mid \rv{I}_n,\rv{U}_n\right]\\
\nonumber &= \frac{1}{N}\E^{{g}} \sum_{n = 1}^N \E \left[\mathcal{C}_{1}(\vct{\rho}(n+1))- \mathcal{C}_{1}(\vct{\rho}(n)) \mid \vct{\rho}(n),\rv{U}_n\right]\\
&=: \frac{1}{N}\E^{{g}} \sum_{n = 1}^N r(\vct{\rho}(n),\rv{U}_n).
\end{align}
Instantaneous reward for this MDP is the expected instantaneous increase in the confidence level and is given by
\begin{align}
\nonumber r(\vct{\rho},u) &=  \sum_{y\in \mathcal{Y}}p_1^u(y)\log{\frac{\rho_1p_1^u(y)}{\sum_{j\neq1}\rho_jp_j^u(y)}} - \log\frac{\rho_1}{(1-\rho_1)}\\
&= \sum_{y\in \mathcal{Y}}p_1^u(y)\log{\frac{p_1^u(y)}{\sum_{j\neq1}\tilde{\rho}_jp_j^u(y)}},
\end{align}
where $\tilde{\rho}_j = \rho_j/(1-\rho_1)$. Note that $\tilde{\rho}_j$ is a probability distribution over the set of alternate hypotheses $\tilde{\mathcal{H}} = \mathcal{H}\setminus\{1\}$. Also, notice that $r(\vct{\rho},u)$ is a KL-divergence between two distributions and hence, is always non-negative. The objective is to find a strategy $g^*$ that maximizes the following average reward
\begin{align}
J(g) := \lim_{N \to \infty}\inf\frac{1}{N}\sum_{n=1}^N \E^g(r(\vct{\rho}(n),\rv{U}_n)).
\end{align}
We use Dynamic Programming (DP) to characterize optimal solutions for this infinite-horizon problem. In this framework, it can be shown that the randomized strategies used in \cite{chernoff1959sequential,nitinawarat2013controlled,naghshvar2013active} asymptotically achieve optimal rate $J^*$. Additionally, we identify a class of strategies that also achieve optimal rate and possibly, converge faster to the optimal rate than policies used in prior works.

Consider the following fixed point equation for the infinite horizon MDP
\begin{equation}\label{inffp}
J' + w(\vct{\rho}) = \max_{u}\{r(\vct{\rho},u) + \sum_y p_1^u(y) w(F(\vct{\rho},u,y))\},
\end{equation}
where $J' \in \R$ is some constant and $w:\Delta{\mathcal{H}} \rightarrow \R$ is some mapping. If such $J'$ and $w$ exist, then with some algebra (see \cite{kumar2015stochastic} for details), we can conclude the following for any experiment selection strategy $g$ (possibly non-stationary)
\begin{align}
&\lim_{N \to \infty}\sup\frac{1}{N}\sum_{n=1}^N \E^g(r(\vct{\rho}(n),\rv{U}_n)) \\
\leq &\lim_{N \to \infty}\sup\frac{1}{N}(\E^gw(\vct{\rho}(1)) - \E^gw(\vct{\rho}(N+1))) + J'.
\end{align}
If we can show that 
\begin{equation}\label{limit}
\lim_{N \to \infty}\sup\frac{1}{N}\left(\E^gw(\vct{\rho}(1)) - \E^gw(\vct{\rho}(N+1))\right) \leq 0,
\end{equation}
for every strategy $g$, then clearly the optimal rate $J^* \leq J'$. Additionally, if for some strategy $g^*$, 
\begin{equation}\label{limit2}
\lim_{N \to \infty}\inf\frac{1}{N}\left(\E^{g^*}w(\vct{\rho}(1)) - \E^{g^*}w(\vct{\rho}(N+1))\right) = 0
\end{equation}
is satisfied and the experiment selected by $g^*$ is a maximizer in the fixed point equation (\ref{inffp}), then $g^*$ is indeed an optimal strategy and $J^* = J'$ \cite{kumar2015stochastic}. Our objective now is to find $J'$ and a function $w$ that satisfy these conditions. We make the following assumption on the conditional distributions $p_h^u(y)$.

\begin{assumption}\label{boundedassump}
There exists a constant $B>0$ such that $|\lambda_j^i(u,y)| < B$ for every experiment $u$, observation $y$ and hypotheses $i,j \in \mathcal{H}$, where
$$
\lambda_j^i(u,y) := \log\frac{p_i^u(y)}{p_j^u(y)}.
$$
\end{assumption}

We use the following defined quantities throughout our proofs. Let
\begin{align}
\vct{\alpha}^* &:= \arg \max_{\vct{\alpha} \in \Delta\mathcal{U}} \min_{j\neq 1} \sum_{u}\alpha_u D(p_1^u || p_j^u)\\
\vct{\beta}^* &:= \arg \min_{\vct{\beta} \in \Delta\tilde{\mathcal{H}}} \max_{u \in \mathcal{U}} \sum_{j \neq 1}\beta_j D(p_1^u || p_j^u).
\end{align}
Since the sets $\mathcal{U}$ and $\mathcal{H}$ are finite, existence of $\vct{\alpha}^*$ and $\vct{\beta}^*$ is guaranteed and also, by minimax theorem \cite{osborne1994course}
\begin{align}
\nonumber \max_{\vct{\alpha} \in \Delta\mathcal{U}} \min_{j\neq 1} \sum_{u}\alpha_u D(p_1^u || p_j^u) &= \min_{\vct{\beta} \in \Delta\tilde{\mathcal{H}}} \max_{u \in \mathcal{U}} \sum_{j \neq 1}\beta_j D(p_1^u || p_j^u)\\
& =: R^*. 
\end{align}
We refer to the elements in the support of $\vct{\beta}^*$ as \emph{critical hypotheses} and those in the support of $\vct{\alpha}^*$ as \emph{critical experiments}.
In particular, we show that the optimal rate $J^* = R^*$.

\section{Dynamic Programming Solution }\label{sec:solution}

In this section,  we solve the MDP formulated in Section \ref{sec:mdp}. Lemma \ref{lemma:fp} identifies a solution for the fixed point equation (\ref{inffp}) and the subsequent Corollary \ref{corol} is used to obtain an upper bound on $J^*$. We then show that this upper bound can indeed be achieved.
\begin{lemma}\label{lemma:fp}
The fixed point equation (\ref{inffp}) is satisfied with $J' = R^*$ and
\begin{equation}
w(\vct{\rho}) = -\sum_{j \neq 1}\beta^*_j\log\frac{\rho_j}{1-\rho_1} = -\sum_{j \neq 1}\beta^*_j\log\tilde{\rho}_j.
\end{equation}
Also, any critical experiment is a maximizer in the fixed point equation (\ref{inffp}).
\end{lemma}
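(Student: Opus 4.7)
The plan is to directly verify the fixed point equation by explicit computation, exploiting the multiplicative structure of the Bayes update and the minimax characterization of $R^*$.

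First I would unroll the Bayes update on the unnormalized alternate posterior $\tilde{\rho}_j$. A short calculation from $\rho_j(n+1) = \rho_j p_j^u(y)/\sum_{h'} \rho_{h'} p_{h'}^u(y)$ yields
\begin{equation}
\tilde{\rho}_j(n+1) \;=\; \frac{\tilde{\rho}_j\, p_j^u(y)}{\sum_{k\neq 1} \tilde{\rho}_k\, p_k^u(y)},
\end{equation}
so that $-\log \tilde{\rho}_j(n+1) = -\log \tilde{\rho}_j - \log p_j^u(y) + \log \sum_{k\neq 1} \tilde{\rho}_k p_k^u(y)$. Multiplying by $\beta_j^*$, summing over $j\neq 1$, and using $\sum_j \beta_j^* = 1$ gives
\begin{equation}
w(F(\vct{\rho},u,y)) \;=\; w(\vct{\rho}) - \sum_{j\neq 1}\beta_j^* \log p_j^u(y) + \log \sum_{k\neq 1}\tilde{\rho}_k p_k^u(y).
\end{equation}

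Next I would take expectation under $p_1^u$ and add $r(\vct{\rho},u) = \sum_y p_1^u(y)\log p_1^u(y) - \sum_y p_1^u(y)\log \sum_{j\neq 1}\tilde{\rho}_j p_j^u(y)$. The term $\sum_y p_1^u(y) \log \sum_{k\neq 1}\tilde{\rho}_k p_k^u(y)$ cancels cleanly, leaving
\begin{equation}
r(\vct{\rho},u) + \sum_y p_1^u(y) w(F(\vct{\rho},u,y)) \;=\; w(\vct{\rho}) + \sum_{j\neq 1}\beta_j^*\, D(p_1^u \,\|\, p_j^u),
\end{equation}
where I again used $\sum_j \beta_j^* = 1$ to pull $\log p_1^u(y)$ into the sum. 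Maximizing over $u$ and invoking the definition of $\vct{\beta}^*$ (via the minimax identity stated in Section \ref{sec:mdp}) gives $\max_u \sum_{j\neq 1}\beta_j^* D(p_1^u\|p_j^u) = R^*$, which establishes the fixed point equation with $J' = R^*$.

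Finally, for the second claim I would use the saddle point property of $(\vct{\alpha}^*,\vct{\beta}^*)$. Since
\begin{equation}
R^* \;=\; \sum_u \alpha_u^* \sum_{j\neq 1}\beta_j^* D(p_1^u\|p_j^u) \;\leq\; \max_u \sum_{j\neq 1}\beta_j^* D(p_1^u\|p_j^u) \;=\; R^*,
\end{equation}
the average on the left equals its maximal term, so every $u$ with $\alpha_u^* > 0$ must attain $\sum_{j\neq 1}\beta_j^* D(p_1^u\|p_j^u) = R^*$, i.e.\ every critical experiment is a maximizer in (\ref{inffp}). The only genuinely delicate step is the cancellation in the second paragraph, where one must track carefully which $\log$ terms come from $r$ versus from the change in $w$; the minimax identity and the saddle point argument for critical experiments are then routine consequences.
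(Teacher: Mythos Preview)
Your proof is correct and arrives at the same key identity as the paper, namely that
\[
r(\vct{\rho},u) + \sum_y p_1^u(y)\, w(F(\vct{\rho},u,y)) - w(\vct{\rho}) \;=\; \sum_{j\neq 1}\beta_j^*\, D(p_1^u\|p_j^u),
\]
after which the minimax definition of $R^*$ finishes the argument. The organization differs slightly: the paper introduces the auxiliary function $v(\vct{\rho}) = w(\vct{\rho}) + \mathcal{C}_1(\vct{\rho}) = \sum_{j\neq 1}\beta_j^*\log(\rho_1/\rho_j)$ and uses the observation that $r(\vct{\rho},u)$ is by definition the expected increment of $\mathcal{C}_1$, so that $r + \E[\Delta w] = \E[\Delta v]$, which it then evaluates via the Bayes update on $\rho_1/\rho_j$. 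You instead track the Bayes update on $\tilde{\rho}_j$ directly and let the mixture term $\log\sum_{k\neq 1}\tilde{\rho}_k p_k^u(y)$ cancel against the corresponding term in $r$. Both are the same calculation; the paper's packaging via $v$ hides the cancellation, while yours makes it explicit. Your saddle-point argument for why every $u$ in the support of $\vct{\alpha}^*$ attains the maximum is actually more detailed than the paper, which simply asserts that critical experiments are maximizers in the final line.
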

\begin{proof}
Define
$
v(\vct{\rho}) := w(\vct{\rho}) + \mathcal{C}_1(\vct{\rho})$, that is $$v(\vct{\rho}) := \sum_{j \neq 1}\beta^*_j\log\frac{\rho_1}{\rho_j}.$$
Therefore, we have for every $u$
\begin{align}
& \sum_y p_1^u(y) w(F(\vct{\rho},u,y)) - w(\vct{\rho}) \\
& \qquad\qquad =\sum_y p_1^u(y)v(F(\vct{\rho},u,y)) - v(\vct{\rho}) - r(\vct{\rho},u).
\end{align}
This is because $r(\vct{\rho},u)$ equal to the expected increase in the confidence level $\mathcal{C}_1(\vct{\rho})$ after performing the experiment $u$. Hence,
\begin{align}
&\max_{u}\{r(\vct{\rho},u) + \sum_y p_1^u(y) w(F(\vct{\rho},u,y))\} - w(\vct{\rho})\\
&= \max_{u} \sum_y p_1^u(y)v(F(\vct{\rho},u,y)) - v(\vct{\rho}) \\
&= \max_{u} \sum_y p_1^u(y)\sum_{j\neq1} \beta^*_j\log\frac{\rho_1p_1^u(y)}{\rho_j p_j^u(y)} - v(\vct{\rho})\\
&=\max_{u} \sum_y p_1^u(y)\sum_{j\neq1} \beta^*_j(\log\frac{\rho_1}{\rho_j}+\log\frac{p_1^u(y)}{p_j^u(y)}) - v(\vct{\rho})\\
&= \max_{u} \sum_y p_1^u(y)\sum_{j\neq1} \beta^*_j\log\frac{p_1^u(y)}{p_j^u(y)} + v(\vct{\rho}) - v(\vct{\rho})\\
&= \label{maximizer}\max_{u} \sum_{j\neq1} \beta^*_jD(p_1^u || p_j^u) = R^* = J'.
\end{align}
The last equality follows from the fact that $\vct{\beta}^*$ is a solution for the minimax problem and the minimax value is equal to $R^*$. Therefore, $J'$ and $w$ satisfy the fixed point equation (\ref{inffp}). Note that any critical experiment $u$ is a maximizer in (\ref{maximizer}).\end{proof}
%

\begin{corollary}\label{corol}
For any strategy $g$, we have
\begin{align}
&\lim_{N \to \infty}\sup\frac{1}{N}\left(\E^g w[\vct{\rho}(1)) - \E^gw(\vct{\rho}(N+1)]\right)\\
=& \lim_{N \to \infty}\sup\frac{1}{N}\sum_{j \neq 1}\beta^*_j\E^g\log{\tilde{\rho}_j(N+1)} \leq 0.
\end{align}
\end{corollary}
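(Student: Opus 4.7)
The plan is to substitute the closed form of $w$ supplied by Lemma \ref{lemma:fp}, observe that $\tilde{\vct{\rho}}$ is a probability distribution on the alternate hypotheses $\tilde{\mathcal{H}}$, and use this structural fact to deduce non-positivity.

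First, I would substitute $w(\vct{\rho}) = -\sum_{j \neq 1}\beta^*_j \log \tilde{\rho}_j$ into the telescoped difference to obtain
\begin{equation*}
\frac{1}{N}\left(\E^g w(\vct{\rho}(1)) - \E^g w(\vct{\rho}(N+1))\right) = \frac{1}{N}\sum_{j \neq 1}\beta^*_j \E^g \log \tilde{\rho}_j(N+1) - \frac{1}{N}\sum_{j \neq 1}\beta^*_j \log \tilde{\rho}_j(1).
\end{equation*}
The prior $\vct{\rho}(1)$ is deterministic and (under the standing assumption that it lies in the interior of the simplex) strictly positive, so $\log \tilde{\rho}_j(1)$ is finite and the subtracted term is $O(1/N)$. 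Passing to the limit supremum then yields the first equality in the corollary.

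For the inequality, the key observation is that $\tilde{\rho}_j(n) := \rho_j(n)/(1-\rho_1(n))$ is a probability distribution over $\tilde{\mathcal{H}}$, i.e. $\sum_{j \neq 1}\tilde{\rho}_j(n) = 1$ with $\tilde{\rho}_j(n) \in [0,1]$, so $\log \tilde{\rho}_j(n) \leq 0$ almost surely. Since $\vct{\beta}^* \in \Delta\tilde{\mathcal{H}}$ has non-negative components, summing against $\vct{\beta}^*$ preserves this sample-path bound:
\begin{equation*}
\sum_{j \neq 1}\beta^*_j \log \tilde{\rho}_j(N+1) \leq 0 \quad \text{almost surely}.
\end{equation*}
Taking expectations under $\Py^g$, dividing by $N$, and passing to $\limsup$ preserves the non-positivity, giving the desired $\leq 0$.

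The only subtle point I foresee is that $\tilde{\rho}_j(N+1)$ can approach $0$, so in principle $\E^g \log \tilde{\rho}_j(N+1)$ could equal $-\infty$. This is not a real obstacle since it only strengthens the inequality $\leq 0$; if one prefers to work with finite quantities, Assumption \ref{boundedassump} bounds every one-step log-likelihood increment by $B$, so $|\log \tilde{\rho}_j(N+1) - \log \tilde{\rho}_j(1)|$ grows at most linearly in $N$, keeping $\E^g \log \tilde{\rho}_j(N+1)$ finite for each fixed $N$ and making the $\limsup$ manipulation rigorous.
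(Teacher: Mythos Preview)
Your proposal is correct and follows essentially the same approach as the paper: the paper's entire proof is the one-line observation that $\tilde{\rho}_j(N+1)\leq 1$, which is exactly the key inequality you invoke. Your additional care with the $O(1/N)$ initial term and the possible $-\infty$ issue only makes the argument more complete than the paper's terse version.
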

\begin{proof}
This is simply because $\tilde{\rho}_j(N+1) \leq 1$.
\end{proof}

\begin{theorem} The optimal average rate $J^* \leq R^*.$
\end{theorem}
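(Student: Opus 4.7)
The plan is to execute the standard average-reward dynamic programming argument whose ingredients have already been assembled by Lemma \ref{lemma:fp} and Corollary \ref{corol}. From Lemma \ref{lemma:fp}, the pair $(J', w)$ with $J' = R^*$ and $w(\vct{\rho}) = -\sum_{j \neq 1}\beta^*_j \log \tilde{\rho}_j$ satisfies the fixed point equation (\ref{inffp}). In particular, for every belief $\vct{\rho}$ and every experiment $u \in \mathcal{U}$, the maximum upper-bounds any single choice, so
$$
r(\vct{\rho}, u) + \sum_y p_1^u(y)\, w(F(\vct{\rho}, u, y)) \;\leq\; R^* + w(\vct{\rho}).
$$

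I would then fix an arbitrary strategy $g \in \mathcal{G}$ and apply this per-step inequality at each time $n$ with $\vct{\rho} = \vct{\rho}(n)$ and $u = \rv{U}_n = g_n(\rv{I}_n)$. Since $\rv{H}=1$, the conditional distribution of $\rv{Y}_n$ given $(\vct{\rho}(n), \rv{U}_n)$ is $p_1^{\rv{U}_n}$, so the inner sum over $y$ is exactly $\E^g\!\left[w(\vct{\rho}(n+1))\mid \vct{\rho}(n), \rv{U}_n\right]$. Taking $\E^g$ on both sides, summing from $n=1$ to $N$, and telescoping the $w$-terms yields
$$
\sum_{n=1}^N \E^g\!\left[r(\vct{\rho}(n), \rv{U}_n)\right] \;\leq\; N R^* + \E^g w(\vct{\rho}(1)) - \E^g w(\vct{\rho}(N+1)).
$$

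Dividing by $N$ and passing to $\limsup_{N\to\infty}$ on both sides, the left-hand side dominates $J(g) = \liminf_N \tfrac{1}{N}\sum_n \E^g r(\vct{\rho}(n),\rv{U}_n)$, and the boundary term on the right is exactly the quantity Corollary \ref{corol} shows to be non-positive. Thus $J(g) \leq R^*$, and since $g$ was arbitrary, $J^* = \sup_g J(g) \leq R^*$. No serious obstacle arises: the only mild care is in the $\liminf$/$\limsup$ bookkeeping when translating the per-step inequality into an asymptotic-average statement, which is handled by the elementary bound $\liminf \leq \limsup$ together with Corollary \ref{corol}.
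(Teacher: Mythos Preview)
Your proposal is correct and follows exactly the approach the paper takes: the paper's proof simply cites Lemma~\ref{lemma:fp} (the fixed point solution with $J'=R^*$) and Corollary~\ref{corol} (inequality~(\ref{limit})) as the two ingredients of the standard average-reward DP upper bound already outlined in Section~\ref{sec:mdp}. You have merely spelled out the telescoping/$\limsup$ algebra that the paper defers to \cite{kumar2015stochastic}.
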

\begin{proof}
This directly follows from the fact that $w$ defined in Lemma \ref{lemma:fp} satisfies inequality (\ref{limit}) and with $J' = R^*$, the fixed point equation (\ref{inffp}) is satisfied.
\end{proof}

\begin{theorem}\label{achievethm}
The optimal average rate $J^* = R^*$.
\end{theorem}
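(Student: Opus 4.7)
My plan is to establish achievability by constructing a concrete strategy $g^*$ and verifying the two sufficient conditions from the infinite-horizon DP framework recalled just before Assumption~\ref{boundedassump}. I would take $g^*$ to be the open-loop randomized strategy that selects $\rv{U}_n$ i.i.d.\ from $\vct{\alpha}^*$. By Lemma~\ref{lemma:fp}, every critical experiment (every $u$ in the support of $\vct{\alpha}^*$) is a maximizer of the fixed-point equation (\ref{inffp}), so $g^*$ picks a maximizer at every step. The upper bound $J^* \leq R^*$ is already in hand, so what remains is verifying the stability condition (\ref{limit2}). Since $w(\vct{\rho}) = -\sum_{j\neq 1}\beta_j^*\log\tilde{\rho}_j \geq 0$ (because $\tilde{\rho}_j \leq 1$) and $\E^{g^*}[w(\vct{\rho}(1))]$ is a fixed constant, (\ref{limit2}) reduces to showing $\frac{1}{N}\,\E^{g^*}[w(\vct{\rho}(N+1))] \to 0$.

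The heart of the argument is a sub-linear bound on $\E^{g^*}[w(\vct{\rho}(N+1))]$. Define the log-likelihood ratios $Z_j(n) := \log(\rho_1(n)/\rho_j(n))$; a direct manipulation of Bayes' rule shows $\tilde{\rho}_j = \bigl(\sum_{k\neq 1}e^{Z_j-Z_k}\bigr)^{-1}$, hence
\begin{equation}
-\log\tilde{\rho}_j(n) \;\leq\; \log|\tilde{\mathcal{H}}| \;+\; \max_{k\neq 1}\bigl(Z_j(n)-Z_k(n)\bigr).
\end{equation}
Under $\rv{H}=1$ and $\rv{U}_n \sim \vct{\alpha}^*$ i.i.d., the process $Z_j(n)-Z_k(n)$ is a random walk with i.i.d.\ increments of mean $R_j - R_k$, where $R_i := \sum_u \alpha_u^* D(p_1^u\|p_i^u)$. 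The saddle-point structure defining $\vct{\alpha}^*,\vct{\beta}^*$ forces $R_j = R^*$ whenever $\beta_j^* > 0$, while the definition of $\vct{\alpha}^*$ gives $R_k \geq R^*$ for every $k$; hence for each critical $j$ appearing in $w$, $\max_k(R_j - R_k) \leq 0$. Using Assumption~\ref{boundedassump} to bound per-step increments by $B$, a second-moment estimate gives $\sqrt{\mathrm{Var}(Z_j(N+1)-Z_k(N+1))} = O(\sqrt{N})$, and the elementary inequality $\E[\max_k X_k] \leq \max_k \E[X_k] + \sum_k \sqrt{\mathrm{Var}(X_k)}$ then yields $\E^{g^*}\bigl[-\log\tilde{\rho}_j(N+1)\bigr] = O(\sqrt{N})$ for each critical $j$.

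Averaging over the finitely many critical $j$ with weights $\beta_j^*$ gives $\E^{g^*}[w(\vct{\rho}(N+1))] = O(\sqrt{N})$, so $\frac{1}{N}\,\E^{g^*}[w(\vct{\rho}(N+1))] \to 0$ as required. Feeding this into the DP sufficient-conditions argument recalled in Section~\ref{sec:mdp} yields $J(g^*) = R^*$, and combining with the already-established upper bound gives $J^* = R^*$. The main technical obstacle is the concentration-type estimate on $\E[\max_{k\neq 1}(Z_j-Z_k)]$: it must be uniform over the competing hypotheses $k$, including non-critical $k$ for which the drift $R_j - R_k$ is strictly negative and might be tempting (but unnecessary) to exploit. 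The crude sum-over-$k$ bound suffices and is the cleanest route, since the finiteness of $|\tilde{\mathcal{H}}|$ absorbs the union cost without affecting the $O(\sqrt{N})$ growth.
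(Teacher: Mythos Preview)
Your proposal is correct and follows the same overall architecture as the paper: take $g^*$ to be the open-loop i.i.d.-$\vct{\alpha}^*$ strategy, invoke Lemma~\ref{lemma:fp} to guarantee that every selected experiment is a maximizer in~(\ref{inffp}), and then verify the stability condition, which (since $w\geq 0$ and $w(\vct{\rho}(1))$ is fixed) indeed reduces to $\tfrac{1}{N}\E^{g^*}[w(\vct{\rho}(N+1))]\to 0$.

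Where you diverge from the paper is in how this last limit is established. The paper writes $\log\tilde\rho_j(N+1)=\rv{X}_j(N+1)-\log\sum_{k\neq 1}e^{\rv{X}_k(N+1)}$, shows via SLLN that $\rv{X}_j(N+1)/N\to -R_j$ almost surely for every alternate $j$, sandwiches the log-sum-exp between $\max_k\rv{X}_k$ and $\max_k\rv{X}_k+\log(|\mathcal H|-1)$, and then passes to expectations by bounded convergence (Assumption~\ref{boundedassump} gives the uniform bound); the two $-R^*$ contributions cancel for each critical $j$. Your route is purely second-moment: you bound $-\log\tilde\rho_j$ by $\log|\tilde{\mathcal H}|+\max_{k\neq 1}(Z_j-Z_k)$, note that for critical $j$ each walk $Z_j-Z_k$ has nonpositive drift $R_j-R_k\leq 0$, and control $\E[\max_k(Z_j-Z_k)]$ via the elementary inequality $\E[\max_k X_k]\leq\max_k\E[X_k]+\sum_k\sqrt{\mathrm{Var}(X_k)}$, giving $\E^{g^*}[w(\vct\rho(N+1))]=O(\sqrt N)$. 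This is more elementary (no SLLN or bounded convergence theorem) and yields a quantitative $O(N^{-1/2})$ decay of the stability term; the paper's argument instead gives almost-sure convergence of $\tfrac{1}{N}\log\tilde\rho_j(N+1)$ as a byproduct but no rate. Both are valid and complementary.
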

\begin{proof}
It is sufficient to show that there exists a strategy $g^*$ that satisfies
\begin{equation}\label{stability}
\lim_{N \to \infty}\inf\frac{1}{N}\sum_{j \neq 1}\beta^*_j\E^{g^*}\log\tilde{\rho}_j(N+1) = 0,
\end{equation}
and the strategy $g^*$ selects only critical experiments. Let
\begin{align}
\rv{X}_j(n+1) = \rv{X}_{j}(n) + \lambda^j_1(\rv{U}_n,\rv{Y}_n),
\end{align}
where $\rv{X}_j(1) = \log \rho_j(1)$. If $\rv{X}_j(N+1) = x_j$ and $\tilde{\rho}_j(N+1)= \tilde{\rho}_j$, we have
\begin{equation}
\log \tilde{\rho}_j = x_j - \log\sum_{k\neq1} e^{x_k}.
\end{equation}
Consider an open-loop randomized strategy where at each time, the experiment is selected independently using the distribution $\vct{\alpha}^*$. Clearly, this strategy selects only critical experiments. Under this open-loop strategy, we have for any $j\neq 1$
\begin{align}
\E [\lambda_1^j(\rv{U},\rv{Y})] &= \sum_u \alpha^*_u \sum_y p_1^u(y)\log(p_j^u(y)/p_1^u(y))\\
&= \sum_u -\alpha^*_uD(p_1^u||p_j^u) =: -R_j. 
\end{align}
Notice that for \emph{every} critical hypothesis $j$, $R_j = R^*$ and for every non-critical alternate hypothesis, $R_j > R^*$. This follows from the definition of $\vct{\alpha}^*$.
Further, we have
\begin{align}
\frac{1}{N}\E\rv{X}_j(N+1) &= \frac{1}{N}\E \rv{X}_j(0) -R_j.
\end{align}
As $N \to \infty$, the term $\rv{X}_j(0)/N \to 0$ and we can ignore it. Thus, for every critical hypothesis $j$,
\begin{align*}
    \frac{1}{N}\E\log\tilde{\rho}_j(N+1) &= \frac{1}{N}\E[\rv{X}_j(N+1) - \log\sum_{k\neq1} e^{\rv{X}_k(N+1)}]\\
    &= -R^* - \frac{1}{N}\E\log\sum_{k\neq1} e^{\rv{X}_k(N+1)}.
\end{align*}
We can ignore the non-critical hypotheses because $\beta_j^* = 0$ for non-critical hypotheses. If we can show that the second term approaches $-R^*$ as $N\to\infty$, then clearly, the condition (\ref{stability}) is satisfied with equality. Using Strong Law of Large Numbers (SLLN) \cite{durrett2010probability}, we can conclude that for \emph{every} alternate hypothesis $j$,
\begin{align}
    \frac{1}{N}\rv{X}_j(N+1) \to -R_j,
\end{align}
with probability 1. We can use SLLN because of Assumption \ref{boundedassump}. Therefore,
\begin{align}
    \max_{j\neq 1}\{\frac{1}{N}\rv{X}_j(N+1)\} \to \max_{j\neq 1}\{-R_j\} = -R^*.
\end{align}
Further, because of Assumption \ref{boundedassump}, $\rv{X}_j(N+1)/N$ is uniformly bounded by $B$ for every alternate hypothesis $j$. Thus, using bounded convergence theorem \cite{durrett2010probability}, we have
\begin{align}
    \E\max_{j\neq 1}\{\frac{1}{N}\rv{X}_j(N+1)\} \to  -R^*.
\end{align}

For the log sum exponential function, we have the following
\begin{align}
\max_{j\neq1} \{\rv{X}_j(N+1)\}&\leq \log\sum_{k\neq 1}e^{\rv{X}_k(N+1)} \\
\nonumber&\leq \max_{j\neq1} \{\rv{X}_j(N+1)\} + \log |\mathcal{H}| -1.
\end{align}
Therefore,
\begin{align}
    \frac{1}{N}\E\log\sum_{k\neq 1}e^{\rv{X}_k(N+1)}  \to  -R^*.
\end{align}
Thus, the open-loop randomized policy $\vct{\alpha}^*$ is asymptotically optimal and $J^* = R^*$.
\end{proof}

To summarize, the following conditions are sufficient for a stationary verification strategy $g$ to be asymptotically optimal:
\begin{enumerate}
    \item The strategy $g$ only selects critical experiments, i.e. experiments from the support of $\vct{\alpha}^*$.
    \item The \emph{stability} criterion in (\ref{stability}) is satisfied, i.e.
    \begin{equation}
\lim_{N \to \infty}\inf\frac{1}{N}\sum_{j \neq 1}\beta^*_j\E^{g^*}\log\tilde{\rho}_j(N+1) = 0.
\end{equation}
\end{enumerate}
These conditions suggest that there could be many strategies other than the open-loop randomized strategy used in Theorem \ref{achievethm} that achieve asymptotic optimality.

\section{Numerical Results}\label{sec:numerical}

In this section, we propose a new heuristic based on a Kullback-Leibler divergence zero-sum game and demonstrate numerically that this heuristic's performance is close to the maximum achievable confidence rate $R^*$. We first briefly describe all the strategies used in our experiments.

\subsubsection{Extrinsic Jensen-Shannon (EJS) Divergence}
Extrinsic Jensen-Shannon divergence as a notion of information was first introduced in \cite{naghshvar2012extrinsic}. Using our notation, EJS for a query $u$ at some belief state $\vct{\rho}$ is given by
\begin{align}
EJS(\vct{\rho},u) = \E [\mathcal{C}(F(\vct{\rho},u,\rv{Y})) - \mathcal{C}(\vct{\rho})],
\end{align}
where 
\begin{equation}
    \mathcal{C}(\vct{\rho}) = \sum_{i \in \mathcal{H}}\rho_i\log\frac{\rho_i}{1-\rho_i} = \sum_{i \in \mathcal{H}}\rho_i\mathcal{C}_i(\vct{\rho}).
\end{equation}Notice that the only random variable in the expression above is $\rv{Y}$ and the expectation is with respect to the distribution $\sum_{h \in \mathcal{H}}\rho_hp^u_h(y)$ on $\mathcal{Y}$. The EJS heuristic selects the experiment $u$ that maximizes $EJS(\vct{\rho},u)$ for a given state $\vct{\rho}$.

\subsubsection{Open Loop Verification (OPE)}
As discussed earlier, the strategies in \cite{naghshvar2013active,nitinawarat2013controlled,chernoff1959sequential} when specialized to verification are open-loop and randomized. According to this strategy, the queries are randomly selected independently in an open-loop manner from the distribution $\vct{\alpha}^*$. Recall that this strategy is asymptotically optimal as shown in Theorem \ref{achievethm}.

\subsubsection{KL-divergence Zero-sum Game (KLZ)}\label{kzg}
We design the following heuristic. Consider a zero-sum game \cite{osborne1994course} in which the first player (maximizing) selects an experiment $u \in \mathcal{U}$ and the second player (minimizing) selects an alternate hypothesis $j \in \tilde{\mathcal{H}}$. The payoff for this zero-sum game is the KL-divergence $D(p_1^u || p_j^u)$. The agent picks an experiment $u$ that maximizes
$$\mathscr{P}(\vct{\rho},u) := {\sum_{j\neq i}\tilde{\rho}_jD(p_1^u||p_j^u)}.$$
This strategy can be interpreted as the first player's best-response when the second player uses the mixed strategy $\tilde{{\rho}}_j$ to select an alternate hypothesis. Note that the mixed strategy $\vct{\alpha}^*$ used in OPE is an equilibrium strategy for the maximizing player.

\subsection{Simulation Setup}
To simulate these heuristics, we first consider a simple setup with three hypotheses and two queries. The conditional distributions $p_i^u(y)$ for each of these queries are illustrated in Figure \ref{table1}.

\begin{figure}[h]
\centering
 \subfloat[][Query $u^1$]{
 \begin{tabular}{ | l | c | c |}
 \hline
   & $y = 0$& $y= 1$ \\
  \hline
  $h_0$ & 0.8 & 0.2 \\
  $h_1$ & 0.2 & 0.8 \\
  $h_2$ & 0.8 & 0.2\\
  \hline
\end{tabular}
 }
 \subfloat[][Query $u^2$]{
 \begin{tabular}{ | l | c | c |}
 \hline
   & $y = 0$& $y= 1$ \\
  \hline
  $h_0$ & 0.8 & 0.2 \\
  $h_1$ & 0.8 & 0.2 \\
  $h_2$ & 0.2 & 0.8\\
  \hline
\end{tabular}
 }
 \caption{Conditional distributions $p_i^u(y)$ for each query}
 \label{table1}
\end{figure}
The queries are designed such that when $\rv{H} = h_0$, the agent is forced to make both queries $u^1$ and $u^2$. This is because hypotheses $h_0$ and $h_2$ are indistinguishable under query $u^1$ and similarly, hypotheses $h_0$ and $h_1$ are indistinguishable under query $u^2$. We illustrate the evolution of expected confidence rate $J_N$ under hypothesis $h_0$ in Figure \ref{plot1}. The heuristics EJS and KLZ come very close to the maximum achievable rate. OPE eventually achieves maximal rate but very slowly.

\begin{figure}[]

\includegraphics[width=\columnwidth]{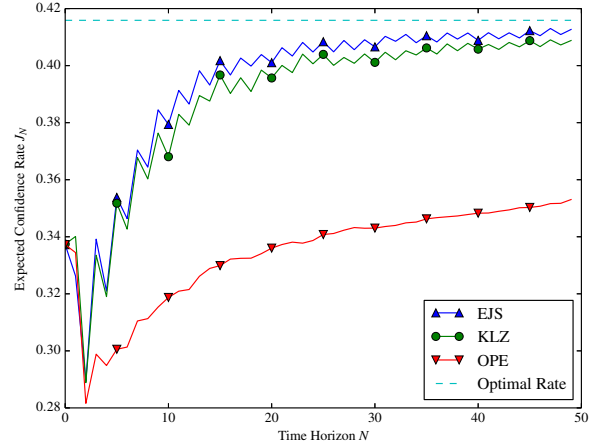}%

\caption{Evolution of expected confidence rate $J_N$ under hypothesis $h_0$ in the first setup with queries $u^1$ and $u^2$. Note the subpar performance of OPE in this setup.}
\label{plot1}
\end{figure}

\begin{figure}[H]
\centering
 \subfloat[][Query $u^3$]{
 \begin{tabular}{ | l | c | c |}
 \hline
   & $y = 0$& $y= 1$ \\
  \hline
  $h_0$ & 0.8 & 0.2 \\
  $h_1$ & $1-\delta$ & $\delta$ \\
  $h_2$ & 0.8 & 0.2\\
  \hline
\end{tabular}
 }
 \subfloat[][Query $u^4$]{
 \begin{tabular}{ | l | c | c |}
 \hline
   & $y = 0$& $y= 1$ \\
  \hline
  $h_0$ & 0.8 & 0.2 \\
  $h_1$ & 0.8 & 0.2 \\
  $h_2$ & $1-\delta$ & $\delta$\\
  \hline
\end{tabular}
 }
 \caption{Conditional distributions $p_i^u(y)$ for each additional query. Here, $\delta = 0.0000001$.}
 \label{table2}
\end{figure}

In the second experimental setup, we include two additional queries $u^3$ and $u^4$ characterized by the distributions in Figure \ref{table2}. When $\rv{H} = h_0$ the queries $u^3$ and $u^4$ together can eliminate at a much faster rate than $u^1$ and $u^2$. Intuitively, this is because when the agent performs $u^3$ and observes $y=1$, the belief on $h_1$ decreases drastically because $y=1$ is extremely unlikely under hypothesis $h_1$. Similarly, $u^4$ is very effective in eliminating $h_2$. The evolution of expected confidence rate under hypothesis $h_0$ with additional experiments $u^3$ and $u^4$ is shown in Figure \ref{plot2}. The heuristics KLZ and OPE select queries $u^3$ and $u^4$ under hypothesis $h_0$. But the greedy heuristic EJS usually selects only $u^1$ and $u^2$ and fails to realize that queries $u^3$ and $u^4$ are more effective under hypothesis $h_0$. The greedy EJS approach fails because queries $u^3$ and $u^4$ are constructed in such way that they are optimal over longer horizons but are sub-optimal over shorter horizons. Thus the assumption required for asymptotic optimality of EJS in \cite{naghshvar2012extrinsic} does not hold in this setup.
\begin{figure}[]

\includegraphics[width=\columnwidth]{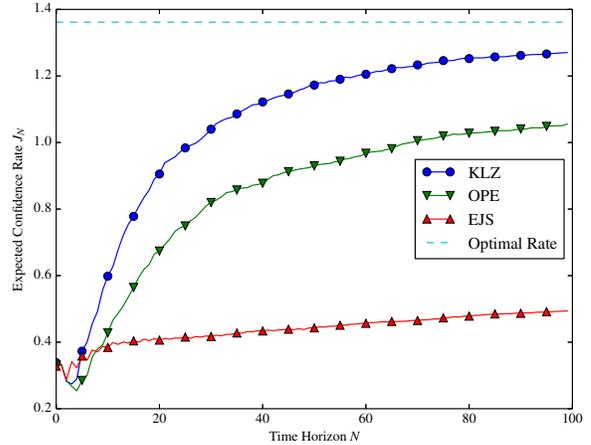}%

\caption{Evolution of expected confidence rate $R_N$ under hypothesis $h_0$ in the second setup with additional queries $u^3$ and $u^4$. Note the subpar performance of OPE and EJS in this setup.}
\label{plot2}
\end{figure}

\subsection{Stopping Time Formulation}
In \cite{chernoff1959sequential,nitinawarat2013controlled,naghshvar2013sequentiality}, a stopping time formulation for hypothesis testing is considered. The sampling process stops when the belief on some hypothesis exceeds a threshold or equivalently, when the confidence $\mathcal{C}_h(\vct{\rho}) > \log L$, where $L$ is a parameter. Let this stopping time be $\rv{N}$. Under this stopping criterion, we numerically study the expected stopping time for all the strategies discussed. The plots in Figures \ref{plot3} and \ref{plot4} depict the quantity $\E[\rv{N}]/\log L$ as a function of the parameter $L$. Numerical results suggest that our heuristic performs better even in the stopping time formulation.

\begin{figure}[]

\includegraphics[width=\columnwidth]{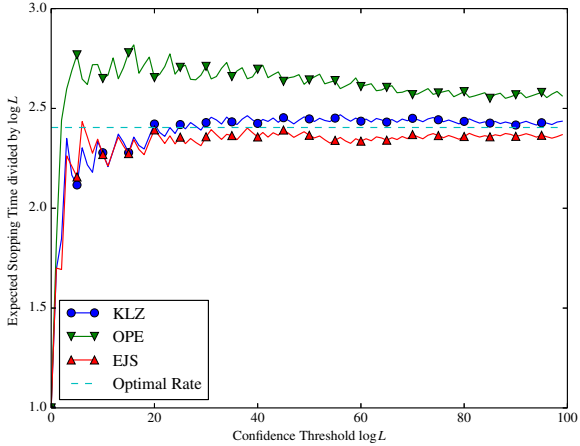}%

\caption{Evolution of expected stopping time under hypothesis $h_0$ in the first setup with queries $u^1$ and $u^2$. Note the subpar performance of OPE in this setup.}
\label{plot3}
\end{figure}

\begin{figure}[]

\includegraphics[width=\columnwidth]{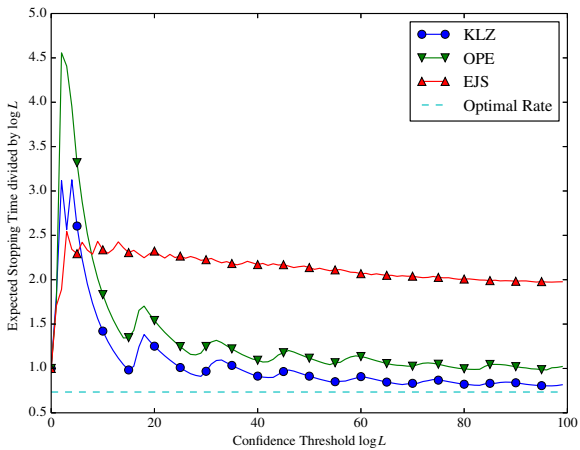}%

\caption{Evolution of expected stopping time under hypothesis $h_0$ in the second setup with additional queries $u^3$ and $u^4$. Note the subpar performance of OPE and EJS in this setup.}
\label{plot4}
\end{figure}

\section{Conclusion}\label{sec:conclusion}
In this paper, we formulate the problem of quickly verifying a given hypothesis using observations from experiments as an infinite horizon average cost MDP. We characterize the optimal rate of this MDP using infinite horizon dynamic programming. A stability criterion arises out of the DP equations. We show that any strategy that satisfies this stability criterion while selecting experiments from a critical set is asymptotically optimal. We proposed a heuristic adaptive strategy and numerically demonstrated that it performs better than open-loop policies in the non-asymptotic regime. For future work, we intend to use this stability criterion, perhaps with additional penalty terms, to design strategies with better non-asymptotic performance.

\bibliographystyle{IEEEbib}

\bibliography{refs}

\begin{thebibliography}{10}

\bibitem{nitinawarat2013controlled}
Sirin Nitinawarat, George~K Atia, and Venugopal~V Veeravalli,
\newblock ``Controlled sensing for multihypothesis testing,''
\newblock {\em IEEE Transactions on Automatic Control}, vol. 58, no. 10, pp.
  2451--2464, 2013.

\bibitem{naghshvar2013active}
Mohammad Naghshvar, Tara Javidi, et~al.,
\newblock ``Active sequential hypothesis testing,''
\newblock {\em The Annals of Statistics}, vol. 41, no. 6, pp. 2703--2738, 2013.

\bibitem{chernoff1959sequential}
Herman Chernoff,
\newblock ``Sequential design of experiments,''
\newblock {\em The Annals of Mathematical Statistics}, vol. 30, no. 3, pp.
  755--770, 1959.

\bibitem{naghshvar2015bayesian}
Mohammad Naghshvar, Tara Javidi, and Kamalika Chaudhuri,
\newblock ``Bayesian active learning with non-persistent noise,''
\newblock {\em IEEE Transactions on Information Theory}, vol. 61, no. 7, pp.
  4080--4098, 2015.

\bibitem{wald1973sequential}
Abraham Wald,
\newblock {\em Sequential analysis},
\newblock Courier Corporation, 1973.

\bibitem{bessler1960theory}
Stuart~Alan Bessler,
\newblock {\em Theory and applications of the sequential design of experiments,
  k-actions and infinitely many experiments},
\newblock Department of Statistics, Stanford University., 1960.

\bibitem{naghshvar2012extrinsic}
Mohammad Naghshvar and Tara Javidi,
\newblock ``Extrinsic jensen-shannon divergence with application in active
  hypothesis testing,''
\newblock in {\em Information Theory Proceedings (ISIT), 2012 IEEE
  International Symposium on}. IEEE, 2012, pp. 2191--2195.

\bibitem{hosmer2013applied}
David~W Hosmer~Jr, Stanley Lemeshow, and Rodney~X Sturdivant,
\newblock {\em Applied logistic regression}, vol. 398,
\newblock John Wiley \& Sons, 2013.

\bibitem{kumar2015stochastic}
Panganamala~Ramana Kumar and Pravin Varaiya,
\newblock {\em Stochastic systems: Estimation, identification, and adaptive
  control}, vol.~75,
\newblock SIAM, 2015.

\bibitem{osborne1994course}
Martin~J Osborne and Ariel Rubinstein,
\newblock {\em A course in game theory},
\newblock MIT press, 1994.

\bibitem{durrett2010probability}
Rick Durrett,
\newblock {\em Probability: theory and examples},
\newblock Cambridge university press, 2010.

\bibitem{naghshvar2013sequentiality}
Mohammad Naghshvar and Tara Javidi,
\newblock ``Sequentiality and adaptivity gains in active hypothesis testing,''
\newblock {\em IEEE Journal of Selected Topics in Signal Processing}, vol. 7,
  no. 5, pp. 768--782, 2013.

\end{thebibliography}
\end{document}